\newtheorem{theorem}{Theorem}
\newtheorem{lemma}{Lemma}
\newtheorem{definition}{Definition}
\newtheorem{corollary}{Corollary}
\theoremstyle{remark}
\newtheorem*{remark}{Remark}
\definecolor{PCAccent}{RGB}{85,78,68}
\definecolor{PCBorder}{RGB}{227,222,214}
\definecolor{PCBack}{RGB}{252,250,246}
\definecolor{PCTitle}{RGB}{38,36,33}
\definecolor{PCBand}{RGB}{246,243,238}
\newcommand{\pccompactlist}{%
  \setlength{\itemsep}{2pt}%
  \setlength{\parsep}{0pt}%
  \setlength{\parskip}{0pt}%
  \setlength{\topsep}{2pt}%
}
\newenvironment{promptcard}[1]{%
  \par\medskip\noindent
  \def\FrameCommand##1{%
    % Left accent + framed panel with subtle background
    {\color{PCAccent}\vrule width 1.4pt}\hspace{7pt}%
    \fboxsep=8pt\color{PCBorder}%
    \fcolorbox{PCBorder}{PCBack}{##1}%
  }%
  \MakeFramed{\advance\hsize-\width\FrameRestore}%
  % Title block: soft band + compact spacing
  {\setlength{\fboxsep}{4pt}%
   \colorbox{PCBand}{%
     \parbox{\dimexpr\linewidth-2\fboxsep\relax}{%
       {\color{PCTitle}\bfseries\scshape Prompt Card:\ }{\color{PCTitle}\bfseries #1}
     }%
   }%
  }\par\vspace{6pt}%
  \begingroup\small\color{black}%
  % Fine-tune paragraph spacing inside the card
  \setlength{\parskip}{3pt}\setlength{\parindent}{0pt}%
}{%
  \endgroup
  \endMakeFramed
  \par\medskip
}
\begin{document}

% If your paper is accepted and the title of your paper is very long,
% the style will print as headings an error message. Use the following
% command to supply a shorter title of your paper so that it can be
% used as headings.
%
\runningtitle{Audit-of-Understanding: Posterior-Constrained Inference for Mathematical Reasoning}

% If your paper is accepted and the number of authors is large, the
% style will print as headings an error message. Use the following
% command to supply a shorter version of the author names so that
% they can be used as headings (for example, use only the surnames)
%
%\runningauthor{Surname 1, Surname 2, Surname 3, ...., Surname n}

\twocolumn[

%\aistatstitle{Think Before You Guess: Audit-of-Understanding Prompting for Faithful Language Model Reasoning}

\aistatstitle{Audit-of-Understanding: Posterior-Constrained Inference \\for Mathematical Reasoning in Language Models}

%\aistatstitle{Audit-of-Understanding: Posterior-Constrained Inference for\\ Mathematical Reasoning in Language Models}
%Audit-of-Understanding Prompting for Faithful\\ Language Model Reasoning
\aistatsauthor{ Samir Abdaljalil \And Erchin Serpedin \And  Khalid Qaraqe \And Hasan Kurban }

\aistatsaddress{ Texas A\&M University\\
  College Station, TX., USA\\ \{sabdaljalil, eserpedin\}@tamu.edu \And  Hamad Bin Khalifa University\\Doha, Qatar\\ 
  \{kqaraqe, hkurban\}@hbku.edu.qa}
]

\begin{abstract}
Large language models (LLMs) often generate reasoning traces that appear coherent but rest on unsupported assumptions, leading to hallucinated conclusions. Prior work mainly addresses factual hallucinations or relies on post-hoc verification, leaving reasoning-induced hallucinations largely unaddressed. We propose \emph{Audit-of-Understanding} (AoU), a framework that constrains inference to validated premises through three phases: (1) decomposing a query into candidate assumptions, (2) auditing their support, and (3) conditioning inference only on the validated subset. Formally, AoU is \emph{posterior-constrained inference}, connecting to selective prediction and rejection learning. Our contributions are threefold: (i) theoretical guarantees under perfect validation, (ii) excess-risk bounds under imperfect audits, and (iii) tractability analysis. Empirically, AoU improves both accuracy and faithfulness on GSM8K, MultiArith, and SVAMP, achieving up to +30\% gains on GSM8K, +45\% on MultiArith, and consistent +20--28\% improvements on SVAMP over Chain-of-Thought, Self-Consistency, and CoT-Decoding. Code is available at \url{https://anonymous.4open.science/r/audit-of-understanding-E28B}. 
\end{abstract}

\section{INTRODUCTION}
\label{sec:intro}

A central challenge in machine learning is ensuring that inference is \emph{faithful} to its underlying assumptions. 
Models often operate over latent structures—whether in probabilistic graphical models, structured prediction, or sequential reasoning—yet the assumptions driving inference are rarely surfaced or audited.  
As a result, predictions may depend on unsupported or speculative premises, yielding outputs that appear coherent but are not logically grounded.  This problem cuts across domains: from structured prediction in vision and language, to decision-making in reinforcement learning, to scientific and medical reasoning systems.  LLMs provide a vivid illustration of this issue. 
While capable of impressive multi-step reasoning, they frequently produce \emph{hallucinations}—outputs that rely on assumptions not given or entailed by the input 
\citep{maynez-etal-2020-faithfulness,pagnoni-etal-2021-understanding,ji-etal-2024-llm}. 

\begin{figure}
\begin{center}
\includegraphics[width=0.85\columnwidth]{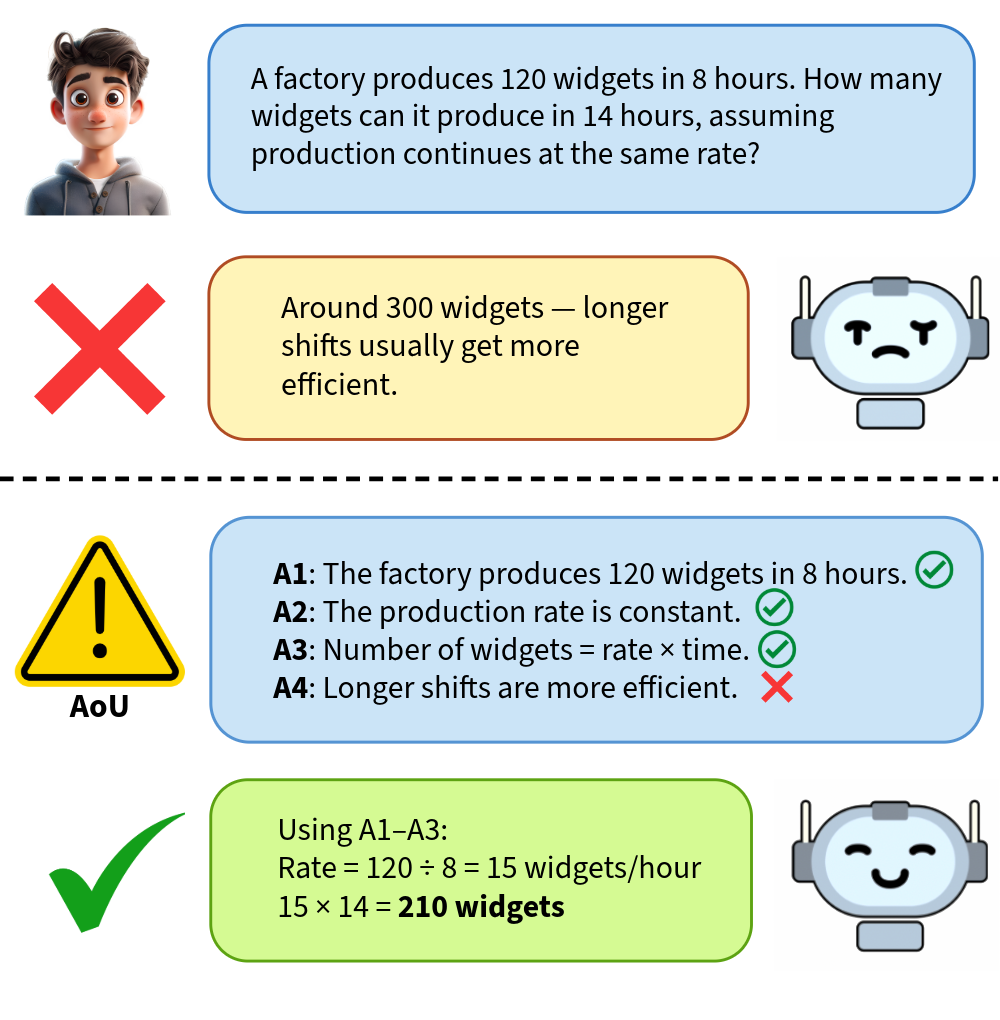}
\caption{Illustrative example of Audit-of-Understanding. The top shows reasoning with unchecked assumptions leading to an incorrect answer, while the bottom shows AoU filtering invalid assumptions, yielding the correct answer. }
\end{center}
 \label{fig:toy_example}
 \end{figure}
Prior work highlights that these hallucinations often emerge not from faulty recall, but from flawed intermediate reasoning steps 
\citep{turpin2023language,press-etal-2023-measuring}. 
Existing mitigation strategies—retrieval augmentation \citep{10.5555/3495724.3496517}, post-hoc verification \citep{gao2022,zheng2024critic}, 
and self-evaluation \citep{manakul-etal-2023-selfcheckgpt,dhuliawala2023chain,shinn2023reflexion}—primarily address factuality after inference has already occurred. 
Prompting methods such as Chain-of-Thought \citep{cot_wei} improve transparency but often introduce fabricated bridging facts, reinforcing the problem rather than eliminating it. 
What is missing is a principled approach that constrains inference to validated assumptions \emph{before} prediction.

We introduce Audit-of-Understanding (AoU), a general framework for faithful inference under partial support. 
AoU decomposes a query into candidate premises, validates which are supported, and conditions inference only on the validated subset. 
Fig. \ref{fig:toy_example} presents an example that illustrates how AoU works. 
Formally, AoU can be cast as a constrained inference problem (see Section~\ref{sec:method}) \citep{cortes2016learning,chow2009optimum} as well as constrained inference in probabilistic graphical models \citep{koller2009probabilistic}. 
Eliminating unsupported assumptions prior to prediction yields formal faithfulness and risk-control guarantees (see Section~\ref{sec:method}), while producing interpretable reasoning traces.

Our contributions are threefold:  
(1) We formalize AoU as \emph{posterior-constrained inference}, with definitions, guarantees, and complexity analysis;  
(2) We derive excess risk bounds under imperfect validation, linking validator reliability to prediction risk;  
and (3) We demonstrate empirically, on mathematical reasoning benchmarks, that AoU substantially reduces hallucinations while preserving accuracy. 
Although our experiments focus on LLM reasoning, the framework applies broadly to any learning setting where inference depends on unvalidated intermediate assumptions.

% The remainder of the paper is organized as follows. Section~\ref{sec:related} reviews relevant work on reasoning and hallucination in LLMs. Section~\ref{sec:method} introduces the AoU framework and outlines its theoretical foundation. Section~\ref{sec:exp} describes the experimental setup, including the datasets, models, and baseline methods. Section~\ref{sec:results} presents and analyzes the evaluation results. 
% Section~\ref{sec:limitations} outlines key limitations of the approach, and 
% Section~\ref{sec:conclusion} concludes the paper with a discussion of promising future directions.

\section{BACKGROUND AND RELATED WORK}
\label{sec:related}

\subsection{Faithful Inference and Selective Classification}
Ensuring that predictions are faithful to validated information is a longstanding problem in machine learning. 
Selective classification and rejection-option learning \citep{chow2009optimum,cortes2016learning} formalize settings 
where a predictor abstains rather than risk making unsupported inferences. 
Similarly, probabilistic graphical models study inference under structural or evidence constraints, 
where efficiency depends on bounded treewidth and principled marginalization strategies \citep{koller2009probabilistic}. 
Our work extends these ideas by proposing a framework in which inference is explicitly conditioned on validated premises, 
providing guarantees against unsupported reasoning traces.

\subsection{Hallucination in Generative Models}
Generative models, and particularly LLMs, make the challenge of faithfulness concrete. 
Hallucination—defined as producing fluent but factually unsupported content—has been studied extensively in summarization \citep{maynez-etal-2020-faithfulness,pagnoni-etal-2021-understanding}, 
question answering \citep{manakul-etal-2023-selfcheckgpt}, 
and instruction following \citep{ji-etal-2024-llm}. 
Most approaches aim to reduce factual errors through retrieval-augmented generation \citep{10.5555/3495724.3496517}, 
post-hoc verification \citep{gao2022,zheng2024critic,gou2024critic}, 
or self-consistency checks \citep{dhuliawala2023chain,shinn2023reflexion}. 
While effective at improving factual alignment, these methods largely address hallucinations after the reasoning process has already occurred.

\subsection{Reasoning-Induced Hallucination}
Recent work highlights that hallucinations can also arise from flawed intermediate reasoning. 
Chain-of-Thought prompting \citep{cot_wei} improves performance but often introduces fabricated intermediate steps 
\citep{turpin2023language}, while methods such as Self-Ask \citep{press-etal-2023-measuring} and ReAct 
\citep{DBLP:journals/corr/abs-2210-03629} scaffold reasoning into subproblems without distinguishing 
between grounded and speculative assumptions. 
Post-hoc methods like Chain-of-Verification \citep{dhuliawala2023chain} or Critic-Prompting \citep{zheng2024critic,gou2024critic} 
add verification after inference, but unsupported assumptions may already have influenced predictions. 
Our work differs by introducing an explicit \emph{audit phase}, which separates given, inferred, and missing premises before inference begins.

% \subsection{Evaluation of Reasoning Fidelity}
% Benchmarks such as HotpotQA \citep{yang2018hotpotqa}, StrategyQA \citep{geva2021did}, and TruthfulQA \citep{lin2021truthfulqa} 
% test reasoning and factuality under multi-hop or deceptive conditions, but rarely isolate hallucination at the assumption level. 
% By contrast, our evaluation framework measures whether each reasoning step is grounded in validated information, 
% providing finer-grained insights into model faithfulness than final-answer accuracy alone.

\subsection{Mathematical Reasoning}
Large language models have recently demonstrated surprising competence in mathematical problem solving, but their reasoning remains inconsistent and prone to systematic errors. Early studies (e.g., \citet{cobbe2021trainingverifierssolvemath}) showed that transformer-based models can solve arithmetic and algebraic problems when trained on synthetic data, but their performance sharply drops on out-of-distribution tasks. Chain-of-thought prompting \citep{cot_wei} and self-consistency decoding \citep{wang2023selfconsistency} improve accuracy by encouraging step-by-step derivations, yet they also introduce hallucinated intermediate steps and unstable final answers, particularly on multi-step or compositional tasks. Our AoU framework complements this line of research by introducing an explicit pre-generation audit of assumptions and intermediate claims, with the goal of reducing unsupported steps and increasing faithfulness in mathematical reasoning without relying on external tools.

\begin{figure*}[ht]
\centering
\includegraphics[width=0.9\textwidth]{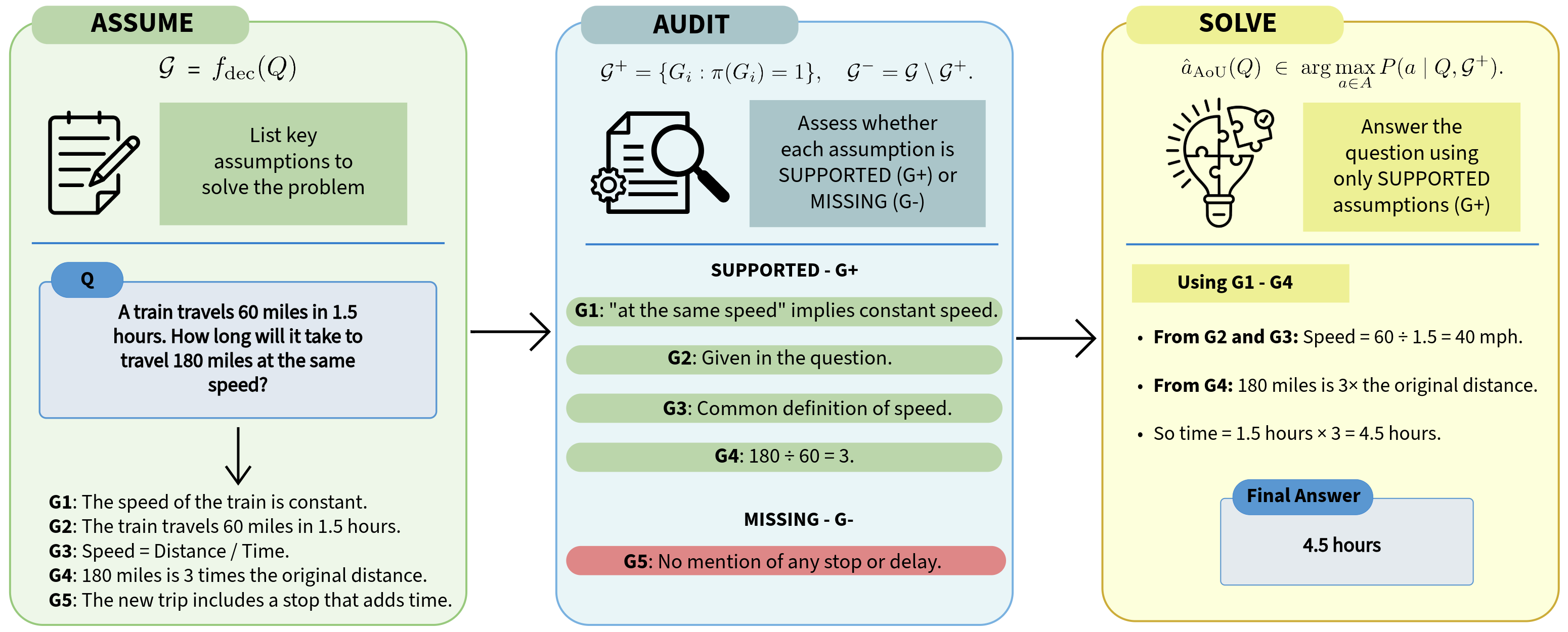} 
\caption{Illustration of the Audit-of-Understanding (AoU) pipeline on a real math word problem.
The pipeline consists of three phases: (1) Assume,
% : The model is prompted to list the minimal set of assumptions needed to solve the problem, labeled A1–A5. 
(2) Audit,
% : Each assumption is evaluated as either SUPPORTED (green) or MISSING (red), based solely on the information given in the problem or universally accepted knowledge. In this example, A5 is marked as MISSING because the original question does not mention any stop or delay. 
and (3) Solve.
% : The final answer is computed using only the assumptions marked SUPPORTED
This prevents hallucinated reasoning steps and ensures faithfulness. The model uses G1–G4 to compute the speed and deduce that the 180-mile trip will take 4.5 hours.}
\label{fig:method}
\end{figure*}

\section{Methodology}
\label{sec:method}

We formalize AoU as a posterior-constrained inference framework over latent assumptions. 
Let $Q$ denote the input query, $\mathcal A$ the finite action space, and 
$\mathcal{G} = \{G_1, \dots, G_m\}$ the set of candidate assumptions generated by the model.  Whereas standard prompting marginalizes over all of $\mathcal{G}$, AoU restricts reasoning to a validated subset $\mathcal{G}^+ \subseteq \mathcal{G}$. The pipeline is illustrated in Fig.~\ref{fig:method}.

\paragraph{Notation.}
For a set of variables $\mathcal H$, let $\mathcal{S}(\mathcal H)$ denote their joint configuration space, and for a single variable $G_i$, let $\mathcal{S}(G_i)$ denote its state space. 
Bold symbols represent tuples, e.g., $\boldsymbol{g} \in \mathcal{S}(\mathcal{G})$.  The ground-truth answer for query $Q$ is denoted $A^\star \in \mathcal Y$ (we suppress the explicit
dependence on $Q$ when unambiguous), and losses are measured by a bounded function
$L: \mathcal A \times \mathcal Y \to [0,1]$. We denote by $\hat a(Q)$ an arbitrary predictor, by $\hat a_{\mathrm{AoU}}(Q)$ the AoU predictor under perfect validation, by $\hat a_\pi(Q)$ the AoU predictor with a (possibly imperfect) validator $\pi$, and by $a^\dagger(Q)$ the Bayes action under perfect validation.  Let $\mathcal{Q}$ denote the space of possible queries $Q$.\\
\noindent\emph{Semantics.} Each $G_i$ is a proposition encoded as a discrete random variable
($G_i\in\mathcal S(G_i)$); support means $G_i\in S_i^{+}\subseteq\mathcal S(G_i)$.
Under \emph{perfect validation}, $S_i^{+}$ is the true set of supported states (no uncertainty).
\paragraph{Standing assumptions.}
We assume the action set $\mathcal A$ is finite (ties broken by a fixed rule) and use a bounded loss $L\in[0,1]$.
\noindent\paragraph{Cost model.}
Unless stated otherwise, we adopt a unit-cost oracle model in which each validator call and each
evaluation of a label-conditional probability $P(y\mid\cdot)$ costs $O(1)$. In concrete LLM settings,
token-level computation implies non-constant costs; with per-call constants $C_\pi$ and $C_P$, complexities
become $O(m\,C_\pi)$ for validation and the stated inference terms multiplied by $C_P$.

\begin{promptcard}{1 - Assumption Enumeration}
\textbf{Instruction.} Given a task or question, enumerate the minimal set of assumptions, facts, or subgoals required to reach a solution. Do not solve the task.

\medskip
\textbf{Constraints.}
\begin{itemize}\pccompactlist
  \item List only essential items, labeled G1, G2, G3, $\ldots$
  \item Avoid trivial, vague, or redundant statements
  \item Each entry must be precise, concise, and problem-specific
\end{itemize}

\medskip
\textbf{Input.} \texttt{Question: \{q\}} \quad
\textbf{Output.} \texttt{G1, G2, G3, \dots}
\end{promptcard}

\subsection{Phase 1: Decomposition of Reasoning Requirements}

Given an input $Q$, the model produces a set of required premises
\begin{equation}
\mathcal{G} = f_{\text{dec}}(Q),
\label{eq:decomposition}
\end{equation}
where $f_{\text{dec}}$ is a decomposition function. 
Each $G_i \in \mathcal{G}$ is labeled as
\begin{equation}
\ell_i \in \{\texttt{GIVEN}, \texttt{INFERRED}, \texttt{MISSING}\}.
\label{eq:labels}
\end{equation}
This step surfaces the model’s internal assumptions before inference. 
In Prompt Card 1, we show the prompt designed to enumerate all assumptions required to answer the question.

\subsection{Phase 2: Assumption Validation}

A validator $\pi: \mathcal{G} \to \{0,1\}$ audits each premise:
\begin{equation}
\pi(G_i) =
\begin{cases}
1 & \text{if $G_i$ is supported or logically entailed}, \\
0 & \text{otherwise}.
\end{cases}
\label{eq:validator}
\end{equation}
We define the validated and rejected subsets:
\begin{equation}
\mathcal{G}^+ = \{ G_i : \pi(G_i)=1\}, 
\quad
\mathcal{G}^- = \mathcal{G}\setminus \mathcal{G}^+.
\label{eq:gplusminus}
\end{equation}
\noindent
Unless stated otherwise, $\pi$ is assumed deterministic. Section~\ref{sec:imperfect} treats probabilistic
validators with per-premise false-positive/false-negative rates, modeling auditing errors.
\noindent\textit{Note.} The validated set $\mathcal G^{+}$ is determined solely by $\pi$, regardless of the
Phase~1 tags $\ell_i$; i.e., $\pi$ overrides $\ell_i$ when forming $\mathcal G^{+}$. Consequently,
all faithfulness guarantees below are independent of Phase~1 tagging accuracy. In Prompt Card 2, we showcase the prompt used to audit the assumptions.

\begin{promptcard}{2 - Audit Phase}
\textbf{Instruction.} Given (i) a question and (ii) a set of assumptions $(G_1, G_2, \ldots)$, assess whether each assumption is supported by the question or by unambiguous implications thereof. Do not introduce external knowledge. 

\medskip
\textbf{Evaluation Rules.}
\begin{itemize}\pccompactlist
  \item For each $G_i$, assign a label: \texttt{[SUPPORTED]} or \texttt{[MISSING]}.
  \item Provide a brief justification for every label; be strict and conservative.
  \item Do not invent facts; rely only on what is explicitly stated or clearly inferable.
\end{itemize}

\medskip
\textbf{Input.}\\
\texttt{Question: \{q\}}\\
\texttt{Assumptions: \{G1, G2, \dots\}}

\medskip
\textbf{Output.} One line per assumption, e.g.:\\
\texttt{G1 — [SUPPORTED]: <short reason>}\\
\texttt{G2 — [MISSING]: <short reason>}\\
\texttt{G3 — [SUPPORTED]: <short reason>}
\end{promptcard}

\subsection{Phase 3: Constrained Inference}

AoU conditions inference on $\mathcal{G}^+$ and defines the AoU predictor
\begin{equation}
\hat a_{\mathrm{AoU}}(Q)\;\in\;\arg\min_{a\in\mathcal A}\ \mathbb{E}\!\left[L\!\big(a,A^\star\big)\ \middle|\ Q,\mathcal G^{+}\right].
\label{eq:constrained_decision}
\end{equation}

\noindent\textit{Special case (0--1 loss).}
When $\mathcal A=\mathcal Y$ and $L$ is $0$--$1$, \eqref{eq:constrained_decision} reduces to
$\hat a_{\mathrm{AoU}}(Q)\in\arg\max_{y\in\mathcal Y}P(y\mid Q,\mathcal G^{+})$.\\

\noindent\textit{Shorthand.} We write $P(y \mid Q,\mathcal G^{+})$ for the predictive distribution over labels conditioned on the conjunction of validation events, i.e.,
$P\!\left(y \,\middle|\, Q, \bigcap_{i:\,G_i\in\mathcal G^{+}} \{G_i \in S_i^{+}\}\right)$. (Throughout this section we write $P(y\mid\cdot)$ for brevity when referring to the predictive distribution over labels.)
When essential information is missing, we marginalize over $\mathcal{G}^-$ conditioned on $\mathcal{G}^+$:
\begin{equation}
P(y \mid Q, \mathcal{G}^+) \;=\; 
\sum_{\boldsymbol{g}\in \mathcal{S}(\mathcal{G}^-)}
P\!\left(y \mid Q, \mathcal{G}^+, \boldsymbol{g}\right)\,
P\!\left(\boldsymbol{g}\mid Q,\mathcal{G}^+\right).
\label{eq:conditional_reasoning}
\end{equation}
Equivalently, the decision risk can be written as
\[
\mathbb{E}\!\left[L(a,A^\star)\mid Q,\mathcal G^{+}\right]
=\sum_{y\in\mathcal Y} L(a,y)\,P(y\mid Q,\mathcal G^{+}).
\]

\begin{remark}[Hypothetical completions]
The distribution $P(\boldsymbol{g} \mid Q,\mathcal{G}^+)$ is a device for
marginalizing over \emph{unvalidated} premises; it need not represent grounded evidence.
In applications, one may choose conservative families (e.g., low-entropy or adversarial
priors) or trigger abstention when the posterior mass on $\mathcal{G}^-$ is large.
\end{remark}
In Prompt Card 3, we provide the prompt used to elicit the final answer through constrained inference, relying only on the supported assumptions.

\subsection{Bayesian Interpretation}

Standard prompting computes
\begin{equation}
P(y \mid Q) = \sum_{\boldsymbol{g}\in\mathcal{S}(\mathcal{G})}
P(y \mid Q, \boldsymbol{g})\, P(\boldsymbol{g}\mid Q).
\label{eq:standard_inference}
\end{equation} 

Standard prompting marginalizes over all premises (Eq.~\ref{eq:standard_inference}); AoU replaces this with the constrained decision in Eq.~\ref{eq:constrained_decision}.

\subsection{Faithfulness and Complexity Guarantees}

\begin{definition}[Reasoning trace and minimal dependence set]
For $S \subseteq \mathcal{G}$, write $\boldsymbol{g} \equiv_S \boldsymbol{g}'$ if 
$g_i = g'_i$ for all $G_i \in S$. Let $K_{\hat a}(a \mid Q,\boldsymbol{g})$ denote 
the (possibly randomized) conditional kernel used at inference time, where 
$\boldsymbol{g} \in \mathcal{S}(\mathcal{G})$ assigns values to all premises. In generative settings, $K_{\hat a}$ may coincide with sampling or decoding procedures derived from the posterior $P(\cdot)$; we keep them notationally distinct to separate model posteriors from the decision kernel.

A set $S \subseteq \mathcal{G}$ is a \emph{support set} for $\hat a(Q)$ if
\[
K_{\hat a}(a \mid Q,\boldsymbol{g})
\;=\;
K_{\hat a}(a \mid Q,\boldsymbol{g}')
\quad \forall a \in \mathcal A
\]
whenever $\boldsymbol{g} \equiv_S \boldsymbol{g}'$.
Since $\mathcal{G}$ is finite, inclusion-minimal support sets exist but need not be unique. 
We define the \emph{reasoning trace} $\mathcal T(\hat a(Q))$ as the \emph{intersection of all} 
inclusion-minimal support sets (the indispensable premises). 
Its length is $k_{\text{trace}} = |\mathcal T(\hat a(Q))|$.
\end{definition}
\begin{promptcard}{3 - Constrained Reasoning}
\textbf{Instruction.} Given a question and a set of \emph{audited} assumptions $(G_1, G_2, \ldots)$ with labels \texttt{[SUPPORTED]} or \texttt{[MISSING]}, answer the question using only the \texttt{[SUPPORTED]} assumptions.

\medskip
\textbf{Rules.}
\begin{itemize}\pccompactlist
  \item Reference assumptions by index (e.g., “Using G2, …”).
  \item If any critical assumptions are \texttt{[MISSING]}, provide a conditional answer or state why an exact answer is not possible.
  \item Keep the explanation clear, concise, and strictly grounded in the cited assumptions.
\end{itemize}

\medskip
\textbf{Input.}\\
\texttt{Question: \{q\}}\\
\texttt{Audited Assumptions: \{G1: [SUPPORTED] <reason>, G2: [MISSING] <reason>, \dots\}}

\medskip
\textbf{Output.} A brief answer grounded in \texttt{[SUPPORTED]} assumptions, with inline citations to $G_i$. For example:\\
\texttt{Answer: <2--4 sentences referencing G1, G3, ...>}\\
\texttt{If missing: “Under G1 and G3, the result is X; however, without G2 [MISSING], the exact value of Y cannot be determined.”}
\end{promptcard}

\begin{theorem}[Trace faithfulness without unsupported premises]
Under perfect validation, the reasoning trace of $\hat a_{\mathrm{AoU}}$ satisfies
\[
\mathcal T(\hat a_{\mathrm{AoU}}(Q)) \subseteq \mathcal G^{+}
\quad\text{and}\quad
\mathcal T(\hat a_{\mathrm{AoU}}(Q)) \cap \mathcal G^{-}=\varnothing .
\]
\end{theorem}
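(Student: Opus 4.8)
The plan is to reduce both assertions to a single structural fact — that $\mathcal G^{+}$ is itself a \emph{support set} for $\hat a_{\mathrm{AoU}}(Q)$ in the sense of the preceding definition — and then to run a short order-theoretic argument over the finite family of support sets. The second assertion, $\mathcal T(\hat a_{\mathrm{AoU}}(Q))\cap\mathcal G^{-}=\varnothing$, will be immediate from the first together with $\mathcal G^{-}=\mathcal G\setminus\mathcal G^{+}$ from \eqref{eq:gplusminus}, so the real content is the inclusion $\mathcal T(\hat a_{\mathrm{AoU}}(Q))\subseteq\mathcal G^{+}$.

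First I would establish that $\mathcal G^{+}$ is a support set, i.e.\ that $K_{\hat a_{\mathrm{AoU}}}(a\mid Q,\boldsymbol g)=K_{\hat a_{\mathrm{AoU}}}(a\mid Q,\boldsymbol g')$ for all $a\in\mathcal A$ whenever $\boldsymbol g\equiv_{\mathcal G^{+}}\boldsymbol g'$. Unpacking \eqref{eq:constrained_decision} through the shorthand and \eqref{eq:conditional_reasoning}, the AoU decision is the $\arg\min$ over the finite set $\mathcal A$ (ties broken by the fixed rule of the standing assumptions) of $a\mapsto\sum_{y\in\mathcal Y}L(a,y)\,P(y\mid Q,\mathcal G^{+})$, where $P(y\mid Q,\mathcal G^{+})$ conditions on $\bigcap_{i:\,G_i\in\mathcal G^{+}}\{G_i\in S_i^{+}\}$ and is computed by marginalizing over $\boldsymbol h\in\mathcal S(\mathcal G^{-})$ as in \eqref{eq:conditional_reasoning}. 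The conditioning event involves only coordinates indexed by $\mathcal G^{+}$, and the marginalization erases all dependence on the coordinates indexed by $\mathcal G^{-}$; hence any two configurations agreeing on $\mathcal G^{+}$ induce the identical conditioning object, the identical posterior $P(\cdot\mid Q,\mathcal G^{+})$, the identical risk vector, the identical $\arg\min$ set, and — after tie-breaking — the identical action. I expect the work to concentrate here: one must argue carefully that the decision kernel reads \emph{no} information outside $\mathcal G^{+}$ (equivalently, that its dependence on $\boldsymbol g$ factors through the restriction $\boldsymbol g\mapsto\boldsymbol g|_{\mathcal G^{+}}$, consistent with the constrained inference of Phase~3 and Prompt Card~3), and that the deterministic tie-breaking promotes equality of $\arg\min$ sets to equality of the selected action.

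With $\mathcal G^{+}$ known to be a support set, I would finish as follows. Support sets are upward closed: if $S$ is a support set and $S\subseteq S'\subseteq\mathcal G$, then $\boldsymbol g\equiv_{S'}\boldsymbol g'$ implies $\boldsymbol g\equiv_{S}\boldsymbol g'$, so $S'$ is a support set as well. Since $\mathcal G$ is finite and $\mathcal G^{+}$ is a support set, the family of support sets contained in $\mathcal G^{+}$ is nonempty and finite; choose one, $S^{\star}$, of minimum cardinality. Any support set strictly contained in $S^{\star}$ would also be contained in $\mathcal G^{+}$, contradicting minimality, so $S^{\star}$ is inclusion-minimal among \emph{all} support sets. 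By the definition of the reasoning trace as the intersection of all inclusion-minimal support sets, $\mathcal T(\hat a_{\mathrm{AoU}}(Q))\subseteq S^{\star}\subseteq\mathcal G^{+}$, which is the first claim; and since $\mathcal G^{-}\cap\mathcal G^{+}=\varnothing$ by \eqref{eq:gplusminus}, this yields $\mathcal T(\hat a_{\mathrm{AoU}}(Q))\cap\mathcal G^{-}=\varnothing$, completing the proof.
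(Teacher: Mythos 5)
Your proposal is correct and takes essentially the same route as the paper's proof: establish that the AoU decision kernel is invariant to assignments on $\mathcal G^{-}$ (i.e., $\mathcal G^{+}$ is a support set, by construction of the constrained decision in \eqref{eq:constrained_decision}), and then conclude the trace containment, with the disjointness claim following from $\mathcal G^{-}=\mathcal G\setminus\mathcal G^{+}$. The only difference is that you make explicit the order-theoretic step the paper compresses into ``Therefore''---using finiteness to extract an inclusion-minimal support set $S^{\star}\subseteq\mathcal G^{+}$ and then invoking the definition of the trace as the intersection of all inclusion-minimal support sets---which is a welcome, correct elaboration rather than a different argument.
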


\begin{proof}
By construction, $\hat a_{\mathrm{AoU}}$ depends only on $(Q,\mathcal G^{+})$.
Thus for any $\boldsymbol g,\boldsymbol g'$ agreeing on $\mathcal G^{+}$,
$K_{\hat a_{\mathrm{AoU}}}(\cdot\mid Q,\boldsymbol g)=
K_{\hat a_{\mathrm{AoU}}}(\cdot\mid Q,\boldsymbol g')$, so the decision is invariant
to assignments on $\mathcal G^{-}$. Therefore
$\mathcal T(\hat a_{\mathrm{AoU}}(Q))\subseteq \mathcal G^{+}$ and
$\mathcal T(\hat a_{\mathrm{AoU}}(Q))\cap \mathcal G^{-}=\varnothing$.
\end{proof}
The decision may depend on realized values within $\mathcal G^{+}$, so the trace is a
subset of $\mathcal G^{+}$ in general.

\begin{definition}[Hallucination]
For a predictor $\hat a$, define the hallucination event
\[
H(\hat a) := \big[\mathcal{T}(\hat a(Q)) \cap \mathcal{G}^- \neq \varnothing\big].
\]
\end{definition}

\begin{corollary}[No hallucination under perfect validation]
With perfect validation,
\[
\mathbb P\!\big[\,H(\hat a_{\mathrm{AoU}})\,\big] \;=\; 0.
\]
\end{corollary}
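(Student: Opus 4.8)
The plan is to derive the corollary as an immediate consequence of the preceding theorem and the definition of the hallucination event. Recall that the hallucination event for any predictor $\hat a$ is defined as $H(\hat a) = [\mathcal{T}(\hat a(Q)) \cap \mathcal{G}^- \neq \varnothing]$. The theorem on trace faithfulness already establishes, under perfect validation, that $\mathcal{T}(\hat a_{\mathrm{AoU}}(Q)) \cap \mathcal{G}^- = \varnothing$. Substituting this into the definition, the event $H(\hat a_{\mathrm{AoU}})$ is the event that the empty intersection is non-empty, which is the impossible event.

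The key steps, in order, are: (1) recall the definition of $H(\hat a_{\mathrm{AoU}})$; (2) invoke the theorem to conclude $\mathcal{T}(\hat a_{\mathrm{AoU}}(Q)) \cap \mathcal{G}^- = \varnothing$ deterministically, i.e., for every realization of the query $Q$ and every realization of the decomposition and validation process; (3) observe that the indicator of $H(\hat a_{\mathrm{AoU}})$ is therefore identically zero; (4) take the probability (or expectation of the indicator) over whatever randomness the measure $\mathbb{P}$ encodes — randomness in $Q$, in the decomposition $f_{\mathrm{dec}}$, in the randomized kernel $K_{\hat a_{\mathrm{AoU}}}$ — and conclude it equals $0$.

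There is essentially no obstacle here, but the one point worth stating carefully is \emph{why} the conclusion holds pointwise rather than merely in expectation: the theorem's argument shows that $\hat a_{\mathrm{AoU}}$ depends only on $(Q, \mathcal{G}^+)$ by construction, so every inclusion-minimal support set is contained in $\mathcal{G}^+$, hence so is their intersection $\mathcal{T}(\hat a_{\mathrm{AoU}}(Q))$; since $\mathcal{G}^+$ and $\mathcal{G}^-$ partition $\mathcal{G}$, disjointness from $\mathcal{G}^-$ is automatic on every sample path. Because the event is empty (not just null) sample-path-wise, its probability is zero regardless of the underlying distribution on $\mathcal{Q}$ or any internal randomization. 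I would close by noting the contrast made explicit later in the paper: this deterministic guarantee is exactly what degrades to an excess-risk bound once the validator $\pi$ is imperfect (Section~\ref{sec:imperfect}), since then $\mathcal{G}^+$ may itself contain unsupported states and the clean disjointness is lost.
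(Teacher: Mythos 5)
Your proposal is correct and follows essentially the same route as the paper's own proof: it invokes the trace-faithfulness theorem (i.e., that $\hat a_{\mathrm{AoU}}$ conditions only on $(Q,\mathcal G^{+})$, so $\mathcal T(\hat a_{\mathrm{AoU}}(Q))\cap\mathcal G^{-}=\varnothing$) and concludes that $H(\hat a_{\mathrm{AoU}})$ is the impossible event, hence has probability zero. Your additional remark that the guarantee holds sample-path-wise, independently of any randomness in $Q$ or the kernel, is a harmless elaboration of the same argument.
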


\begin{proof}
Under perfect validation, $\mathcal{G}^+$ contains only supported/entailed items and
$\mathcal{G}^-$ only unsupported ones. Since $\hat a_{\mathrm{AoU}}$ conditions solely on
$\mathcal{G}^+$, its reasoning trace cannot use any premise in $\mathcal{G}^-$. Hence
$H(\hat a_{\mathrm{AoU}})$ is impossible, so its probability is $0$.
\end{proof}

We write $u(\mathcal G^{+})$ for the number of constant-time statistic updates required to
incorporate $\mathcal G^{+}$ into the factorization used to evaluate $P(y\mid\cdot)$ (e.g., the predictive distribution);
in a log-linear model each validated premise triggers an $O(1)$ feature update.

\begin{theorem}[Complexity Bound]
\label{thm:complexity}
Let $m=|\mathcal{G}|$ and $k=|\mathcal{G}^-|$. Then:
\begin{enumerate}[leftmargin=*]
    \item Validation requires $O(m)$ calls to $\pi$.  
\item \textbf{Decision cost.} Under a scoring oracle that evaluates $\mathbb{E}[L(a,A^\star)\mid Q,\mathcal G^{+}]$ in $O(1)$ per $a$ (e.g., via $P(y\mid Q,\mathcal G^{+})$ and a finite $\mathcal Y$), computing \eqref{eq:constrained_decision} costs $O(|\mathcal A|)$. If the predictive distribution factorizes so that each validated premise updates sufficient statistics in $O(1)$, the additional incorporation cost is $u(\mathcal G^{+})$, yielding $O(|\mathcal A|+u(\mathcal G^{+}))$ overall (times a per-call constant $C_P$ in autoregressive settings).
    \item  Let $k = |\mathcal{G}^-|$ and let $n$ denote the number of variables in the factorization
used for marginalization over $\mathcal{G}^-$. Worst-case complexity is $O(2^k)$ for binary premises,
and more generally $O\!\big(\prod_{i=1}^k |\mathcal{S}(G_i)|\big)$. Under bounded treewidth $t$ and bounded domain sizes,
junction-tree inference runs in $O(n \exp(t{+}1))$, polynomial in $n$ for fixed $t$
\citep{koller2009probabilistic}. More generally, complexity scales with the product of
domain sizes over maximal cliques. If independence holds across $G_i$, the cost reduces to
$O\!\left(\sum_{i=1}^k |\mathcal{S}(G_i)|\right)$.
\end{enumerate}
\end{theorem}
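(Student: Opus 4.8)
The plan is to treat each of the three claims as a direct accounting of the primitives invoked by Eqs.~\eqref{eq:validator}--\eqref{eq:conditional_reasoning} under the unit-cost oracle model; there is no combinatorial depth here, so the work is in stating the cost model precisely and matching each algorithmic step to a counted operation. For part~(1), Phase~2 applies the validator $\pi$ exactly once to each $G_i\in\mathcal G$ to produce the partition $(\mathcal G^{+},\mathcal G^{-})$ of \eqref{eq:gplusminus}; since $|\mathcal G|=m$ and $\mathcal G^{+}$ is a function only of the bit-vector $(\pi(G_1),\dots,\pi(G_m))$, no premise needs to be re-audited, and the total is $O(m)$ calls — equivalently $O(m\,C_\pi)$ under the per-call-constant refinement noted in the cost-model paragraph.

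For part~(2), the decision rule \eqref{eq:constrained_decision} is an $\arg\min$ over the finite action set $\mathcal A$, with ties broken by the fixed rule from the standing assumptions. Given the stated scoring oracle — one returning $\mathbb E[L(a,A^\star)\mid Q,\mathcal G^{+}]$ in $O(1)$ per action, e.g.\ by reading off $P(y\mid Q,\mathcal G^{+})$ on a finite $\mathcal Y$ and forming $\sum_y L(a,y)P(y\mid Q,\mathcal G^{+})$ — a single linear scan over $\mathcal A$ with a running minimum yields $\hat a_{\mathrm{AoU}}(Q)$ in $O(|\mathcal A|)$. When the oracle is not free but the predictive distribution factorizes so that conditioning on each validated premise is an $O(1)$ sufficient-statistic update (the log-linear example preceding the theorem), incorporating all of $\mathcal G^{+}$ costs $u(\mathcal G^{+})$ updates, for $O(|\mathcal A|+u(\mathcal G^{+}))$ overall; the autoregressive refinement multiplies the probability-evaluation terms by $C_P$, as the cost-model paragraph anticipates.

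For part~(3), the cost is that of evaluating the marginal \eqref{eq:conditional_reasoning}, whose sum ranges over $\mathcal S(\mathcal G^{-})$. Brute-force enumeration touches $\prod_{i=1}^{k}|\mathcal S(G_i)|$ configurations — $2^k$ when every premise is binary. In the independent case, $P(\boldsymbol g\mid Q,\mathcal G^{+})=\prod_i P(g_i\mid Q,\mathcal G^{+})$ is assembled from the $k$ per-variable marginals in $O\!\big(\sum_{i=1}^{k}|\mathcal S(G_i)|\big)$, after which the label marginal is read off the factorization. The intermediate regime is exactly the standard junction-tree / variable-elimination guarantee: for a factor graph on $n$ variables with treewidth $t$ and bounded domains, exact marginalization runs in $O(n\exp(t{+}1))$ — polynomial in $n$ for fixed $t$ — and more generally in time proportional to the product of domain sizes over maximal cliques \citep{koller2009probabilistic}; applying this to the factorization used to represent $P(\,\cdot\mid Q,\mathcal G^{+})$ over the $\mathcal G^{-}$ variables gives the stated bound.

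The one thing to get right, rather than a genuine obstacle, is the interface between the abstract scoring oracle of part~(2) and the structured-inference cost of part~(3): one must state explicitly that the $O(|\mathcal A|)$ figure presumes $P(y\mid Q,\mathcal G^{+})$ is already in hand, so that part~(3) supplies precisely the cost of producing it, and that the $u(\mathcal G^{+})$ and $C_P$ bookkeeping terms compose additively and multiplicatively without double counting. Everything else is a routine match of Eqs.~\eqref{eq:validator}--\eqref{eq:conditional_reasoning} to counted primitives together with the single citation above.
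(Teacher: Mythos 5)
Your proposal is correct and follows essentially the same route as the paper's proof: direct operation counting under the unit-cost oracle model for parts (1) and (2), and an appeal to brute-force enumeration versus standard junction-tree complexity for part (3), with the same citation. Your explicit handling of the independence case and the remark on composing the $u(\mathcal G^{+})$ and $C_P$ terms are slightly more detailed than the paper's proof but do not change the argument.
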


\begin{proof}
Validation inspects each $G_i$, giving $O(m)$ calls. 
For Item 2, computing $\hat a_{\mathrm{AoU}}(Q)=\arg\min_{a\in \mathcal A} \mathbb{E}[L(a,A^\star)\mid Q,\mathcal{G}^+]$ by a naïve scan over $\mathcal A$ uses $O(|\mathcal A|)$ expected-loss evaluations (e.g., via $P(y\mid Q,\mathcal G^{+})$ and finite $\mathcal Y$). If the predictive distribution factorizes with $O(1)$-time updates per validated premise, the total becomes $O(|\mathcal A|+u(\mathcal{G}^+))$ (as stated), times a per-call constant $C_P$ in autoregressive settings.
For Item 3, marginalizing over $k=|\mathcal{G}^-|$ binary premises is $O(2^k)$; more generally $O\!\big(\prod_{i=1}^k |\mathcal{S}(G_i)|\big)$. Under bounded treewidth $t$, junction-tree inference costs $O(n\exp(t{+}1))$ over $n$ variables in the $\mathcal{G}^-$ subgraph.
\end{proof}

\subsection{Error Decomposition and Imperfect Validation}
\label{sec:imperfect}

\paragraph{Validator error rates.}
For each premise $G_i$, let
$\alpha_i := \Pr[\pi(G_i)=0 \mid G_i \text{ supported}]$ (false negative)
and
$\beta_i := \Pr[\pi(G_i)=1 \mid G_i \text{ unsupported}]$ (false positive).

This analysis parallels selective classification and rejection-option learning 
\citep{cortes2016learning,chow2009optimum}, where the reliability of the 
validator directly bounds the excess risk of the predictor.

\begin{definition}[Indicator Function]
For any event $E$,
\[
\mathbf{1}\{E\} =
\begin{cases}
1 & \text{if $E$ holds}, \\
0 & \text{otherwise}.
\end{cases}
\]
\end{definition}

\paragraph{Probability space.}
All probabilities and expectations are taken with respect to the joint measure over
queries $Q\sim \mathsf P_Q$, the ground-truth label $A^\star$ given $Q$, any internal
randomization of the inference kernel $K_{\hat a}$, and (when applicable) the validator
$\pi$. Unless stated otherwise, per-premise validator error rates $(\alpha_i,\beta_i)$
are assumed query-averaged and uniform; the non-uniform case follows by replacing
$\alpha_i,\beta_i$ with $\mathbb E_Q[\alpha_i(Q)]$ and $\mathbb E_Q[\beta_i(Q)]$.

\paragraph{Risk.}
We define the total risk of a predictor $\hat a$ under the joint probability space
specified above as
\[
\mathcal R(\hat a)\;:=\;\mathbb{E}\big[L(\hat a(Q),A^\star)\big],
\]
where the expectation is taken over queries $Q\sim\mathsf P_Q$, the ground-truth
$A^\star$ given $Q$, and any internal randomness of the inference kernel $K_{\hat a}$
and, when applicable, the validator $\pi$.
For AoU with abstention cost $\lambda\in[0,1]$, we use the modified loss
\[
L_\lambda(a,A^\star)=
\begin{cases}
L(a,A^\star) & a\in\mathcal A,\\
\lambda & a=\bot,
\end{cases}
\]
and write $\mathcal R_\lambda(\hat a):=\mathbb E[L_\lambda(\hat a(Q),A^\star)]$.

\begin{lemma}[Error Decomposition]
\label{lem:decomp}
Let $H(\hat a)$ denote the hallucination event. Then the total risk $\mathcal R(\hat a)$ decomposes as
\begin{equation}
\mathcal R 
= \underbrace{\mathbb{E}\!\left[L(\hat a, A^\star)\,\mathbf{1}\{H(\hat a)\}\right]}_{\text{assumption error}}
+ \underbrace{\mathbb{E}\!\left[L(\hat a, A^\star)\,\mathbf{1}\{\neg H(\hat a)\}\right]}_{\text{inference error}}.
\end{equation}
Equivalently,
\begin{equation}
\begin{split}
\mathcal R 
&= \underbrace{\mathbb{E}\!\left[L(\hat a, A^\star)\mid H(\hat a)\right]
   \mathbb P(H(\hat a))}_{\text{assumption error}} \\
&\quad + \underbrace{\mathbb{E}\!\left[L(\hat a, A^\star)\mid \neg H(\hat a)\right]
   \mathbb P(\neg H(\hat a))}_{\text{inference error}}.
\end{split}
\end{equation}
\end{lemma}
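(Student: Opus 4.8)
The plan is to prove this by a direct application of the law of total expectation to the bipartition of the sample space induced by the hallucination event. First I would observe that, on the joint probability space specified in the Risk paragraph (over $Q\sim\mathsf P_Q$, $A^\star$ given $Q$, the internal randomness of $K_{\hat a}$, and $\pi$), the event $H(\hat a)=[\mathcal T(\hat a(Q))\cap\mathcal G^-\neq\varnothing]$ is measurable, and its indicator satisfies $\mathbf 1\{H(\hat a)\}+\mathbf 1\{\neg H(\hat a)\}=1$ pointwise. Multiplying the bounded (hence integrable) random variable $L(\hat a(Q),A^\star)$ by this partition of unity and invoking linearity of expectation yields
\[
\mathcal R(\hat a)=\mathbb E\!\left[L(\hat a,A^\star)\mathbf 1\{H(\hat a)\}\right]+\mathbb E\!\left[L(\hat a,A^\star)\mathbf 1\{\neg H(\hat a)\}\right],
\]
which is the first displayed identity, with the two summands identified as the assumption-error and inference-error contributions respectively.

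For the second (conditional-expectation) form, I would apply the elementary identity $\mathbb E[X\mathbf 1\{E\}]=\mathbb P(E)\,\mathbb E[X\mid E]$, valid for any integrable $X$ and event $E$ with $\mathbb P(E)>0$, separately to $E=H(\hat a)$ and $E=\neg H(\hat a)$. Since $0\le L\le 1$, integrability is automatic, so this rewriting is immediate and the two forms of the lemma are equivalent. The one point requiring a word of care is the degenerate case in which $\mathbb P(H(\hat a))\in\{0,1\}$ — for instance, under perfect validation the no-hallucination corollary gives $\mathbb P(H(\hat a_{\mathrm{AoU}}))=0$ — so that one of the conditional expectations is undefined; in that case the corresponding term is $0$ times an undefined quantity, which we read as $0$ under the standard convention, and the decomposition still holds because $\mathbb E[X\mathbf 1\{E\}]=0$ whenever $\mathbb P(E)=0$.

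I do not expect a substantive obstacle: the statement is a restatement of the tower property applied to a two-cell partition, and boundedness of $L$ removes any integrability concern. The only items worth stating explicitly in the write-up are (i) measurability of $H(\hat a)$, which follows because $\mathcal T(\hat a(Q))$ is a deterministic function of $Q$ together with the finitely many validation outcomes $\{\pi(G_i)\}_{i=1}^m$ and the support structure of the decision kernel, all measurable with respect to the stated joint measure; and (ii) the zero-probability convention noted above. The identical argument applies verbatim with $L$ replaced by $L_\lambda$ and $\mathcal R$ by $\mathcal R_\lambda$, should the decomposition be invoked in the abstention setting.
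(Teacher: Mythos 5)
Your proposal is correct and follows essentially the same route as the paper's proof: the indicator partition of unity with linearity of expectation for the first identity, and the identity $\mathbb E[X\,\mathbf 1\{E\}]=\mathbb E[X\mid E]\,\mathbb P(E)$ for the second. Your added remarks on measurability and the $\mathbb P(H(\hat a))=0$ convention are careful refinements the paper leaves implicit, but they do not change the argument.
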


\begin{proof}
Since $\mathbf{1}\{H(\hat a)\}+\mathbf{1}\{\neg H(\hat a)\}=1$, we have
\[
L(\hat a,A^\star) =
L(\hat a,A^\star)\,\mathbf{1}\{H(\hat a)\}
+ L(\hat a,A^\star)\,\mathbf{1}\{\neg H(\hat a)\}.
\]
Taking expectations yields the first decomposition.  
For the second form, apply the law of total expectation:
\[
\mathbb{E}[L(\hat a,A^\star)\,\mathbf{1}\{H(\hat a)\}]
= \mathbb{E}[L(\hat a,A^\star)\mid H(\hat a)]\mathbb P(H(\hat a)),
\]
and similarly for $\neg H(\hat a)$. This completes the proof.
\end{proof}

\paragraph{Bayes action under AoU.}
Let $a^\dagger(Q)\in\arg\min_{a\in\mathcal A}\mathbb E[L(a,A^\star)\mid Q,\mathcal G^{+}]$.
A premise is “used” when $G_i\in\mathcal T(\cdot)$. Let $\Delta_i\in[0,1]$ upper bound the
marginal loss increase from withholding such an indispensable premise.

\begin{theorem}[Excess-risk upper bound under probabilistic validation]
\label{thm:imperfect-strong}
Using the bounded loss from Section~\ref{sec:method}, let $a^\dagger(Q)\in\arg\min_{a} \mathbb E[L(a,A^\star)\mid Q,\mathcal G^+]$ be the Bayes action under perfect validation. Let $\hat a_\pi$ be AoU with a validator
having per-premise rates $(\alpha_i,\beta_i)$. Define $U_i$ (unsupported and used in
$\mathcal T(\hat a_\pi(Q))$) and $S_i$ (supported, would be used by $a^\dagger$, but
excluded due to FN). Let $p_i^{\text{use}}:=\mathbb E_Q[\mathbf 1\{G_i\in\mathcal T(a^\dagger(Q))\}]$
and $\Delta_i\in[0,1]$ be the maximal marginal loss increase when withholding $G_i$.

Then
\begin{equation*}
\begin{split}
\mathcal R(\hat a_\pi)-\mathcal R(a^\dagger)
\ \le\ 
& \underbrace{\sum_{i=1}^m \beta_i\,\mathbb P(G_i\ \mathrm{unsupported})\,p_i^{\text{use}\mid\mathrm{unsup}}}_{\text{FP inclusion}} \\
&\quad +\ 
\underbrace{\sum_{i=1}^m p_i^{\text{use}}\,\alpha_i\,\Delta_i}_{\text{FN exclusion}} .
\end{split}
\end{equation*}
where $p_i^{\text{use}\mid\mathrm{unsup}}$ is the probability that $G_i$ enters the trace given it slipped in unsupported.
\end{theorem}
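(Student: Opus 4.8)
The plan is to split the excess risk into a \emph{false-positive} contribution and a \emph{false-negative} contribution, each controlled by a union bound over premises. Write $\hat{\mathcal G}^{+}$ for the set of premises accepted by $\pi$ (the object $\hat a_\pi$ conditions on) and $\mathcal G^{\star}$ for the true support set (the object $a^\dagger$ conditions on, i.e.\ perfect validation); these differ by false positives $\hat{\mathcal G}^{+}\setminus\mathcal G^{\star}$ (not truly supported, wrongly admitted) and false negatives $\mathcal G^{\star}\setminus\hat{\mathcal G}^{+}$ (truly supported, wrongly rejected). The core step I would prove is the pointwise inequality
\[
L(\hat a_\pi(Q),A^\star)-L(a^\dagger(Q),A^\star)\ \le\ \mathbf{1}\{\mathcal T(\hat a_\pi(Q))\text{ uses an unsupported premise}\}\;+\!\!\sum_{G_i\in\mathcal T(a^\dagger(Q))\setminus\hat{\mathcal G}^{+}}\!\!\Delta_i ,
\]
after which the theorem follows by taking expectations. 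The first indicator covers the hallucination case: if any false positive reaches $\mathcal T(\hat a_\pi(Q))$, then using only $L\in[0,1]$ the loss gap is at most $1$.

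Off that event, $\mathcal T(\hat a_\pi(Q))$ consists only of premises that are simultaneously truly supported and admitted, so the gap to $a^\dagger$ is driven entirely by the indispensable premises that $a^\dagger$ uses but $\pi$ rejected, namely those in $\mathcal T(a^\dagger(Q))\setminus\hat{\mathcal G}^{+}$; re-introducing them one at a time and invoking the definition of $\Delta_i$ as the maximal marginal loss increase from withholding $G_i$, telescoped over this finite set, accounts for the summation term. It remains to take expectations. The indicator term gives $\mathbb P(\mathcal T(\hat a_\pi(Q))\text{ uses an unsupported premise})$; a union bound over the offending index bounds this by $\sum_i \mathbb P(G_i\text{ unsupported},\,\pi(G_i)=1,\,G_i\in\mathcal T(\hat a_\pi(Q)))$, and writing each summand as $\mathbb P(G_i\text{ unsupported})\,\beta_i\,p_i^{\text{use}\mid\mathrm{unsup}}$ is just the definition of $\beta_i$ together with reading $p_i^{\text{use}\mid\mathrm{unsup}}$ as the probability $G_i$ reaches the trace given it slipped in. The summation term gives $\sum_i \Delta_i\,\mathbb P(G_i\in\mathcal T(a^\dagger(Q)),\,\pi(G_i)=0)$; since the invariance argument behind Theorem~1, applied to $a^\dagger$, makes $G_i\in\mathcal T(a^\dagger(Q))$ imply $G_i$ is supported, and since under the query-averaged uniform-rate convention $\pi(G_i)$ is treated as independent of whether $a^\dagger$ uses $G_i$, this probability is at most $p_i^{\text{use}}\,\alpha_i$. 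Summing the two contributions yields the stated bound.

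The step I expect to be the main obstacle is the claim that, off the hallucination event, the false positives inside $\hat{\mathcal G}^{+}$ contribute \emph{no} excess loss: membership in the reasoning trace — the \emph{intersection} of inclusion-minimal support sets — is formally weaker than the decision kernel being invariant to that coordinate, so one must either argue directly with $K_{\hat a_\pi}$ by exhibiting a minimal support set that avoids the false positives, or absorb the residual into the bound. Secondary care is needed to justify the telescoping that turns simultaneous withholding of $\mathcal T(a^\dagger(Q))\setminus\hat{\mathcal G}^{+}$ into $\sum_i\Delta_i$ (reading $\Delta_i$ as the worst case over assignments to the other withheld premises), and the mild independence used to pass from $\mathbb P(G_i\in\mathcal T(a^\dagger(Q)),\,\pi(G_i)=0)$ to $p_i^{\text{use}}\alpha_i$, which is consistent with the paper's uniform per-premise rate setup.
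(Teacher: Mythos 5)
Your proposal follows essentially the same route as the paper's (terse) proof: split the excess risk into an FP-inclusion term, bounded via $L\le 1$ and $\mathbb P[U_i]\le \beta_i\,\mathbb P(G_i\ \mathrm{unsupported})\,p_i^{\text{use}\mid\mathrm{unsup}}$, and an FN-exclusion term bounded by $\sum_i p_i^{\text{use}}\alpha_i\Delta_i$, then take expectations with a union bound. In fact you supply more detail than the paper does (the pointwise inequality, the telescoping of $\Delta_i$, the independence convention behind $p_i^{\text{use}}\alpha_i$), and the caveats you flag — possible residual effect of admitted false positives off the hallucination event and over-counting of jointly excluded premises — are looseness the paper itself acknowledges in the discussion surrounding the theorem, so no correction is needed.
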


The FN term is a union-bound–style decomposition: it sums marginal loss increments  and can over-count when multiple premises are jointly excluded. Thus the bound may be loose, but it is always safe. A refinement conditions on trace usage under unsupported premises (for the FP part) 
and on counterfactual trace usage when supported premises are withheld (for the FN part):
$\mathbb P[U_i] \le \beta_i\,\mathbb P(G_i\text{ unsupported})\,p_i^{\text{use}\mid \mathrm{unsup}}$,
at the cost of introducing $p_i^{\text{use}\mid \mathrm{unsup}}$.

\begin{proof}
By decomposition, validation errors affect risk in two ways: (i) false positives (FP) allow unsupported premises into $\mathcal{G}^+$, creating events $U_i$, and (ii) false negatives (FN) remove supported premises that would otherwise be useful, creating events $S_i$.  
Since $L\le1$, the contribution of FP errors is at most the expected number of unsupported premises used, $\sum_i \mathbb P[U_i]$. FN errors contribute at most $\sum_i p_i^{\text{use}}\,\alpha_i\,\Delta_i$, where $\Delta_i$ bounds marginal loss increase from excluding $G_i$. This yields the stated decomposition. The upper bound on $\sum_i \mathbb P[U_i]$ follows from $\mathbb P[\pi(G_i)=1 \mid G_i \text{ unsupported}] = \beta_i$. 
\end{proof}

\begin{corollary}[Tightened homogeneous-rate bound]
\label{cor:tight}
Under homogeneous rates $\alpha_i\equiv\alpha$, $\beta_i\equiv\beta$ and the bounded loss of Section~\ref{sec:method},
\[
\mathcal R(\hat a_\pi) \ \le\
\mathcal R(a^\dagger)
\ +\ \mathbb{E}\!\left[\min\{1,K_{\text{FP}}\}\right]
\ +\ \alpha\,\Delta_{\max}\,\mathbb{E}[k_{\text{trace}}^\dagger],
\]
where $k_{\text{trace}}^\dagger := |\mathcal{T}(a^\dagger(Q))|$ is the trace length under perfect validation,
$\Delta_{\max} := \max_{i\in\{1,\dots,m\}} \Delta_i$, and
$K_{\text{FP}}$ is the (random) number of false-positive premises actually used in the reasoning trace.
\end{corollary}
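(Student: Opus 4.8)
The plan is to take the two-term excess-risk bound of \Cref{thm:imperfect-strong} --- excess risk at most a false-positive (FP) contribution plus a false-negative (FN) contribution, obtained by separating the effect of erroneous inclusions into the validated set from the effect of erroneous exclusions --- and sharpen each contribution under the homogeneous-rate hypothesis. Throughout I would keep $\hat a_\pi$ and $a^\dagger$ coupled on a common draw of $(Q,A^\star)$ and a common realization of the validator's error pattern, so that on any query whose error pattern is trace-irrelevant the two predictors agree and contribute no excess loss, by the same invariance argument that proves Theorem~1 (namely, $\hat a_\pi$ depends only on $(Q,\mathcal G^{+})$, so its trace lies in the $\pi$-validated set and is insensitive to everything outside it).

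For the FP term, the key observation is that $K_{\text{FP}}$ --- the number of unsupported, erroneously validated premises actually appearing in $\mathcal T(\hat a_\pi(Q))$ --- is integer-valued, so $\min\{1,K_{\text{FP}}\}=\mathbf{1}\{K_{\text{FP}}\ge 1\}$, and this indicator coincides with the hallucination event $H(\hat a_\pi)$, since any truly unsupported premise that enters the trace must have been validated in error (the trace lies in the $\pi$-validated set). On $\{K_{\text{FP}}=0\}$ the trace is uncontaminated, so no excess loss is attributable to FP inclusions; on the complement, the per-query excess attributable to FP inclusions is at most $1$ because $L\in[0,1]$. Taking expectations bounds the FP contribution by $\mathbb P[K_{\text{FP}}\ge 1]=\mathbb E[\min\{1,K_{\text{FP}}\}]$, which is exactly the union-bound sharpening flagged after \Cref{thm:imperfect-strong} ($\mathbb E[\min\{1,K_{\text{FP}}\}]\le\mathbb E[K_{\text{FP}}]=\sum_i\mathbb P[U_i]\le$ the FP term of \Cref{thm:imperfect-strong}). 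For the FN term I would specialize directly: \Cref{thm:imperfect-strong} bounds it by $\sum_{i=1}^m p_i^{\text{use}}\alpha_i\Delta_i$, so setting $\alpha_i\equiv\alpha$ and $\Delta_i\le\Delta_{\max}$ yields $\alpha\,\Delta_{\max}\sum_{i=1}^m p_i^{\text{use}}$, and linearity of expectation identifies $\sum_{i=1}^m p_i^{\text{use}}=\mathbb E_Q\big[\sum_{i=1}^m\mathbf{1}\{G_i\in\mathcal T(a^\dagger(Q))\}\big]=\mathbb E_Q\big[|\mathcal T(a^\dagger(Q))|\big]=\mathbb E[k_{\text{trace}}^\dagger]$. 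Adding the two refined contributions gives the claimed inequality.

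The step I expect to be the main obstacle is making the FP bookkeeping fully rigorous: pinning down the sense in which a premise that fails to lie in the reasoning trace $\mathcal T$ also fails to change the decision, so that on $\{K_{\text{FP}}=0\}$ the residual excess loss is attributable solely to FN exclusions, and then justifying that the FP and FN effects may be added rather than treated jointly. I would handle this by conditioning on the validator's realized error set and re-running the minimality-and-invariance reasoning behind Theorem~1 and the no-hallucination corollary on each resulting fiber, absorbing any residual interaction between premises that are simultaneously mis-validated into the additive split --- which, as the remark following \Cref{thm:imperfect-strong} already acknowledges, is union-bound-style and hence safe though possibly loose. The remaining ingredients --- the integer identity for $\min\{1,\cdot\}$ and the linearity-of-expectation computation for the trace length --- are routine.
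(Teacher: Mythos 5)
Your proposal is correct and follows essentially the route the paper intends (the corollary is left without an explicit proof, but the remark after Theorem~\ref{thm:imperfect-strong} flags exactly this refinement): bound the FP contribution per query by $\mathbf{1}\{K_{\text{FP}}\ge 1\}=\min\{1,K_{\text{FP}}\}$ using $L\in[0,1]$, and specialize the FN term of Theorem~\ref{thm:imperfect-strong} under homogeneous rates via $\Delta_i\le\Delta_{\max}$ and $\sum_i p_i^{\text{use}}=\mathbb{E}[k_{\text{trace}}^\dagger]$. Your additive FP/FN bookkeeping is at the same (union-bound-style, acknowledged-as-loose-but-safe) level of rigor as the paper's own proof of the theorem, so nothing further is needed.
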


\paragraph{Abstention.}
\begin{definition}[Abstention Action]
We extend the action space to $\mathcal A \cup \{\bot\}$, where $\bot$ denotes the abstention (or rejection) option: the model outputs no answer.
\end{definition}
With abstention loss
\[
L_\lambda(a, A^\star)=
\begin{cases}
L(a,A^\star) & a\in \mathcal A,\\
\lambda & a=\bot,
\end{cases}
\]
where $\lambda\in[0,1]$, all results extend. 
Since the loss is bounded, choosing $\lambda$ below the expected loss of uninformed guessing ensures a strict risk reduction, consistent with Chow’s rejection rule \citep{chow2009optimum}.

\paragraph{Decision rule with abstention.}
For general loss $L$ and abstention cost $\lambda\in[0,1]$, abstain whenever
\[
\min_{a\in\mathcal A}\ \mathbb E[L(a,A^\star)\mid Q,\mathcal G^{+}]\ \ge\ \lambda,
\]
and otherwise output $\arg\min_{a\in\mathcal A}\mathbb E[L(a,A^\star)\mid Q,\mathcal G^{+}]$.
In the 0--1 case with $\mathcal A=\mathcal Y$, this reduces to abstaining whenever $\max_{y\in\mathcal Y}P(y\mid Q,\mathcal G^{+})\le 1-\lambda$.

\section{Experiments}
\label{sec:exp}

\paragraph{Datasets.} 
Our approach was evaluated on three mathematical reasoning benchmarks. \textsc{MultiArith}~\citep{roy-roth-2015-solving} consists of multi-step arithmetic word problems requiring compositional numerical reasoning, \textsc{GSM8K}~\citep{cobbe2021trainingverifierssolvemath}, which contains high school mathematical word problems, and \textsc{SVAMP}~\citep{patel-etal-2021-nlp}, a dataset of mathematical problems of fourth grade level, which require no more than two-hop reasoning.

\begin{figure*}[ht]
\centering
\includegraphics[width=\textwidth]{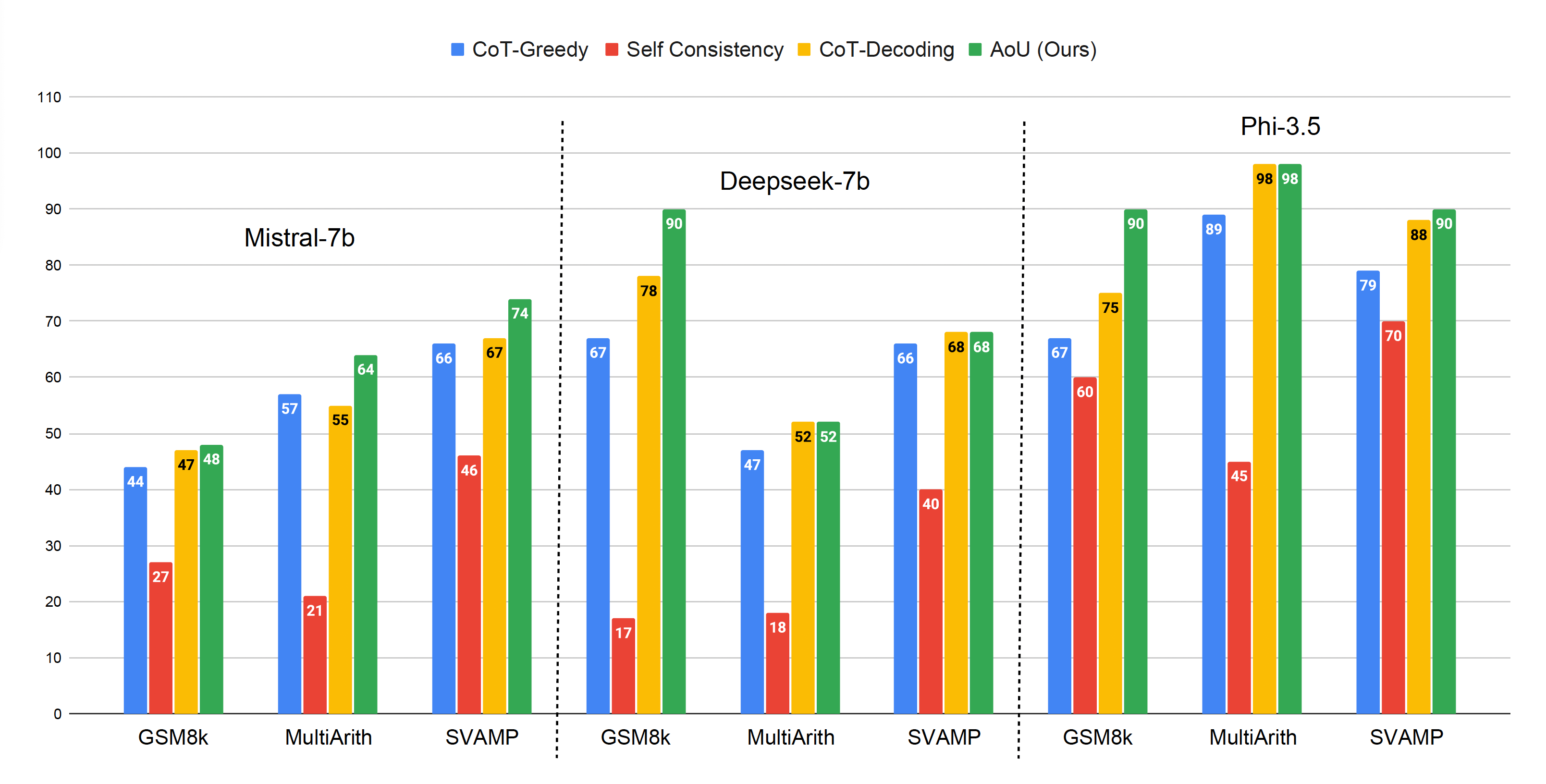} 
\caption{
Performance comparison of different models (\textsc{Mistral-7B-v0.3}, \textsc{DeepSeek-7B}, and \textsc{Phi-3.5}) across three datasets (GSM8k, MultiArith, SVAMP) under various decoding strategies: CoT-Greedy (blue), Self-Consistency (red), CoT-Decoding (yellow), and our proposed AoU (green).}
\label{fig:results}
\end{figure*}

\paragraph{Models.} 
Three publicly available LLMs were used to evaluate AoU against several baselines. (1) \textsc{Mistral-7B}~\citep{jiang2023mistral7b}\footnote{\url{https://huggingface.co/mistralai/Mistral-7B-Instruct-v0.3}} is a general-purpose decoder model optimized for scalable performance, (2) \textsc{DeepSeek-7B}~\citep{deepseekai2025deepseekr1incentivizingreasoningcapability}\footnote{\url{https://huggingface.co/deepseek-ai/deepseek-llm-7b-chat}} is instruction-tuned for multi-turn dialogue and enhanced reasoning alignment, and (3) \textsc{Phi-3.5 Mini}~\citep{abdin2024phi3technicalreporthighly}\footnote{\url{https://huggingface.co/microsoft/Phi-3.5-mini-instruct}} is a compact, cost-efficient model designed for educational and low-resource reasoning scenarios. This model selection enables evaluation across different reasoning behaviors and architectural priors.

\paragraph{Baselines.} 
We compare our method against three strong prompting-based baselines. \textsc{Chain-of-Thought (CoT)}~\citep{cot_wei} guides the model to articulate intermediate reasoning steps before producing a final answer. \textsc{Self-Consistency}~\citep{wang2023selfconsistency} increases samples \( n = 20 \) reasoning paths and returns the most frequent outcome. \textsc{CoT-Decoding}~\citep{wang2024chainofthought} removes explicit reasoning instructions, instead relying on various decoding paths to elicit inherent reasoning strategies in LLMs. These baselines represent standard approaches for encouraging multi-step reasoning without assumption-level control.

\paragraph{Experimental Setup.} 
All models are evaluated as released, without any additional fine-tuning or instruction-tuning. Generation is carried out using a fixed temperature of 1.0 and a maximum output length of 526 tokens. Each input is processed with a single decoding pass. The experiments were carried out on a high performance set-up equipped with a NVIDIA
A100 GPU. To ensure reproducibility, random seeds are fixed and all decoding parameters remain constant across experiments.

\section{Results}
\label{sec:results}

Experimental results are illustrated in Fig.~\ref{fig:results}. AoU prompting achieves state-of-the-art performance across a range of LLMs and math reasoning tasks without relying on external tools. Its ability to constrain reasoning to validated assumptions leads to higher accuracy, improved robustness, and reduced variability compared to standard chain-of-thought strategies. Note that our guarantees concern trace-level faithfulness (no unsupported premises enter the reasoning trace); correctness still depends on the choice of $P(\boldsymbol g\mid Q,\mathcal G^{+})$ used to marginalize unvalidated premises.

\paragraph{Overall Performance.}
Across all datasets and models, AoU consistently outperforms or matches the strongest baseline. On \textsc{Mistral-7B}, AoU achieves 48\%, 64\%, and 74\% accuracy on GSM8k, MultiArith, and SVAMP respectively, improving over CoT-Greedy by +4–8\% and outperforming Self-Consistency by wide margins (e.g., +45\% on MultiArith). Similar trends hold for \textsc{DeepSeek-7B}, where AoU reaches 90\% on GSM8k, 52\% on MultiArith, and 68\% on SVAMP, matching or exceeding the strongest competing reasoning baseline on each task. On the more capable \textsc{Phi-3.5}, the gains narrow as base model performance saturates, but AoU still yields marginal improvements and demonstrates consistent behavior across all tasks.

\paragraph{Comparisons with Baselines.}
Self-Consistency~\citep{wang2023selfconsistency} performs inconsistently across models, often underperforming CoT-Greedy~\citep{cot_wei}, particularly on GSM8k and MultiArith. This suggests that sampling-based diversity does not reliably enhance reasoning faithfulness without structured constraint. CoT-Decoding~\citep{wang2024chainofthought}, which removes explicit prompting and relies on latent reasoning patterns, offers noticeable improvements over greedy decoding in some cases, but remains less stable than AoU. In contrast, AoU delivers robust improvements across models and datasets, highlighting its effectiveness in controlling hallucinated reasoning.

\paragraph{Effect of Model Type.}
As expected, more reasoning-tuned models such as \textsc{Phi-3.5} show smaller absolute gains from AoU, though improvements remain consistent. This suggests that AoU not only helps weaker models reason more faithfully, but also encourages even stronger models to avoid inaccurate inferences and hallucinations, particularly on under-determined or adversarially structured problems.

 \section{Limitations}
 \label{sec:limitations}

 % While AoU prompting demonstrates strong performance improvements and enhanced reasoning faithfulness across multiple datasets and models, it also comes with several limitations.
 % \paragraph{Dependence on Model Calibration.}
 % The effectiveness of AoU relies on the model's ability to accurately self-assess which assumptions are justified, inferable, or unsupported. This process assumes a degree of meta-reasoning capability that may not generalize well across all architectures or instruction-tuning paradigms. In low-resource or poorly aligned models, the audit phase may be overly conservative or permissive, reducing answer quality.
 % \paragraph{No External Verification.}
 % AoU is designed to improve internal reasoning faithfulness without relying on retrieval or external verification modules. While this simplifies deployment and improves interpretability, it also limits the framework’s ability to fact-check or correct unsupported assumptions when they are genuinely needed to answer a question.

While AoU prompting improves reasoning faithfulness across tasks and models, it has some limitations.

\paragraph{Model Dependence.}
AoU assumes the model can reliably judge which assumptions are supported or missing—a form of meta-reasoning that may not generalize across weaker or poorly aligned models. In such cases, the audit step may be overly strict or lenient.

\paragraph{No External Verification.}
AoU operates without retrieval or external tools, improving simplicity and interpretability. However, this limits its ability to fact-check assumptions that are not explicitly stated but necessary for correct answers.

\section{Conclusion and Future Work}
\label{sec:conclusion}
This work introduces AoU prompting, a structured framework for reducing reasoning-induced hallucinations in LLMs. By explicitly separating assumption identification, validation, and constrained reasoning, AoU improves accuracy, interpretability, and faithfulness across diverse mathematical benchmarks. Our method operates without access to external tools or post-hoc verification, offering a lightweight and generalizable approach to controlled generation. While AoU shows strong performance gains, it also opens several avenues for future research. These include extending the framework to non-mathematical domains such as commonsense or scientific reasoning, integrating uncertainty-aware reasoning over weakly supported assumptions, and scaling to longer, multi-turn dialogues. Further, combining AoU with retrieval or formal verification could enhance both factual grounding and logical consistency. 

\bibliographystyle{apalike}

\begin{thebibliography}{}

\bibitem[Abdin et~al., 2024]{abdin2024phi3technicalreporthighly}
Abdin, M., Aneja, J., Awadalla, H., Awadallah, A., Awan, A.~A., Bach, N., Bahree, A., Bakhtiari, A., Bao, J., Behl, H., Benhaim, A., Bilenko, M., Bjorck, J., Bubeck, S., Cai, M., Cai, Q., Chaudhary, V., Chen, D., Chen, D., Chen, W., Chen, Y.-C., Chen, Y.-L., Cheng, H., Chopra, P., Dai, X., Dixon, M., Eldan, R., Fragoso, V., Gao, J., Gao, M., Gao, M., Garg, A., Giorno, A.~D., Goswami, A., Gunasekar, S., Haider, E., Hao, J., Hewett, R.~J., Hu, W., Huynh, J., Iter, D., Jacobs, S.~A., Javaheripi, M., Jin, X., Karampatziakis, N., Kauffmann, P., Khademi, M., Kim, D., Kim, Y.~J., Kurilenko, L., Lee, J.~R., Lee, Y.~T., Li, Y., Li, Y., Liang, C., Liden, L., Lin, X., Lin, Z., Liu, C., Liu, L., Liu, M., Liu, W., Liu, X., Luo, C., Madan, P., Mahmoudzadeh, A., Majercak, D., Mazzola, M., Mendes, C. C.~T., Mitra, A., Modi, H., Nguyen, A., Norick, B., Patra, B., Perez-Becker, D., Portet, T., Pryzant, R., Qin, H., Radmilac, M., Ren, L., de~Rosa, G., Rosset, C., Roy, S., Ruwase, O., Saarikivi, O., Saied, A., Salim, A.,
  Santacroce, M., Shah, S., Shang, N., Sharma, H., Shen, Y., Shukla, S., Song, X., Tanaka, M., Tupini, A., Vaddamanu, P., Wang, C., Wang, G., Wang, L., Wang, S., Wang, X., Wang, Y., Ward, R., Wen, W., Witte, P., Wu, H., Wu, X., Wyatt, M., Xiao, B., Xu, C., Xu, J., Xu, W., Xue, J., Yadav, S., Yang, F., Yang, J., Yang, Y., Yang, Z., Yu, D., Yuan, L., Zhang, C., Zhang, C., Zhang, J., Zhang, L.~L., Zhang, Y., Zhang, Y., Zhang, Y., and Zhou, X. (2024).
\newblock Phi-3 technical report: A highly capable language model locally on your phone.

\bibitem[Chow, 2009]{chow2009optimum}
Chow, C.-K. (2009).
\newblock An optimum character recognition system using decision functions.
\newblock {\em IRE Transactions on Electronic Computers}, (4):247--254.

\bibitem[Cobbe et~al., 2021]{cobbe2021trainingverifierssolvemath}
Cobbe, K., Kosaraju, V., Bavarian, M., Chen, M., Jun, H., Kaiser, L., Plappert, M., Tworek, J., Hilton, J., Nakano, R., Hesse, C., and Schulman, J. (2021).
\newblock Training verifiers to solve math word problems.

\bibitem[Cortes et~al., 2016]{cortes2016learning}
Cortes, C., DeSalvo, G., and Mohri, M. (2016).
\newblock Learning with rejection.
\newblock In {\em International conference on algorithmic learning theory}, pages 67--82. Springer.

\bibitem[DeepSeek-AI et~al., 2025]{deepseekai2025deepseekr1incentivizingreasoningcapability}
DeepSeek-AI, Guo, D., Yang, D., Zhang, H., Song, J., Zhang, R., Xu, R., Zhu, Q., Ma, S., Wang, P., Bi, X., Zhang, X., Yu, X., Wu, Y., Wu, Z.~F., Gou, Z., Shao, Z., Li, Z., Gao, Z., Liu, A., Xue, B., Wang, B., Wu, B., Feng, B., Lu, C., Zhao, C., Deng, C., Zhang, C., Ruan, C., Dai, D., Chen, D., Ji, D., Li, E., Lin, F., Dai, F., Luo, F., Hao, G., Chen, G., Li, G., Zhang, H., Bao, H., Xu, H., Wang, H., Ding, H., Xin, H., Gao, H., Qu, H., Li, H., Guo, J., Li, J., Wang, J., Chen, J., Yuan, J., Qiu, J., Li, J., Cai, J.~L., Ni, J., Liang, J., Chen, J., Dong, K., Hu, K., Gao, K., Guan, K., Huang, K., Yu, K., Wang, L., Zhang, L., Zhao, L., Wang, L., Zhang, L., Xu, L., Xia, L., Zhang, M., Zhang, M., Tang, M., Li, M., Wang, M., Li, M., Tian, N., Huang, P., Zhang, P., Wang, Q., Chen, Q., Du, Q., Ge, R., Zhang, R., Pan, R., Wang, R., Chen, R.~J., Jin, R.~L., Chen, R., Lu, S., Zhou, S., Chen, S., Ye, S., Wang, S., Yu, S., Zhou, S., Pan, S., Li, S.~S., Zhou, S., Wu, S., Ye, S., Yun, T., Pei, T., Sun, T., Wang, T., Zeng, W.,
  Zhao, W., Liu, W., Liang, W., Gao, W., Yu, W., Zhang, W., Xiao, W.~L., An, W., Liu, X., Wang, X., Chen, X., Nie, X., Cheng, X., Liu, X., Xie, X., Liu, X., Yang, X., Li, X., Su, X., Lin, X., Li, X.~Q., Jin, X., Shen, X., Chen, X., Sun, X., Wang, X., Song, X., Zhou, X., Wang, X., Shan, X., Li, Y.~K., Wang, Y.~Q., Wei, Y.~X., Zhang, Y., Xu, Y., Li, Y., Zhao, Y., Sun, Y., Wang, Y., Yu, Y., Zhang, Y., Shi, Y., Xiong, Y., He, Y., Piao, Y., Wang, Y., Tan, Y., Ma, Y., Liu, Y., Guo, Y., Ou, Y., Wang, Y., Gong, Y., Zou, Y., He, Y., Xiong, Y., Luo, Y., You, Y., Liu, Y., Zhou, Y., Zhu, Y.~X., Xu, Y., Huang, Y., Li, Y., Zheng, Y., Zhu, Y., Ma, Y., Tang, Y., Zha, Y., Yan, Y., Ren, Z.~Z., Ren, Z., Sha, Z., Fu, Z., Xu, Z., Xie, Z., Zhang, Z., Hao, Z., Ma, Z., Yan, Z., Wu, Z., Gu, Z., Zhu, Z., Liu, Z., Li, Z., Xie, Z., Song, Z., Pan, Z., Huang, Z., Xu, Z., Zhang, Z., and Zhang, Z. (2025).
\newblock Deepseek-r1: Incentivizing reasoning capability in llms via reinforcement learning.

\bibitem[Dhuliawala et~al., 2023]{dhuliawala2023chain}
Dhuliawala, S., Komeili, M., Xu, J., Raileanu, R., Li, X., Celikyilmaz, A., and Weston, J. (2023).
\newblock Chain-of-verification reduces hallucination in large language models.
\newblock {\em arXiv preprint arXiv:2309.11495}.

\bibitem[Gao et~al., 2022]{gao2022}
Gao, L., Dai, Z., Pasupat, P., Chen, A., Chaganty, A., Fan, Y., Zhao, V., Lao, N., Lee, H., Juan, D.-C., and Guu, K. (2022).
\newblock Attributed text generation via post-hoc research and revision.

\bibitem[Gou et~al., 2024]{gou2024critic}
Gou, Z., Shao, Z., Gong, Y., yelong shen, Yang, Y., Duan, N., and Chen, W. (2024).
\newblock {CRITIC}: Large language models can self-correct with tool-interactive critiquing.
\newblock In {\em The Twelfth International Conference on Learning Representations}.

\bibitem[Ji et~al., 2024]{ji-etal-2024-llm}
Ji, Z., Chen, D., Ishii, E., Cahyawijaya, S., Bang, Y., Wilie, B., and Fung, P. (2024).
\newblock {LLM} internal states reveal hallucination risk faced with a query.
\newblock In Belinkov, Y., Kim, N., Jumelet, J., Mohebbi, H., Mueller, A., and Chen, H., editors, {\em Proceedings of the 7th BlackboxNLP Workshop: Analyzing and Interpreting Neural Networks for NLP}, pages 88--104, Miami, Florida, US. Association for Computational Linguistics.

\bibitem[Jiang et~al., 2023]{jiang2023mistral7b}
Jiang, A.~Q., Sablayrolles, A., Mensch, A., Bamford, C., Chaplot, D.~S., de~las Casas, D., Bressand, F., Lengyel, G., Lample, G., Saulnier, L., Lavaud, L.~R., Lachaux, M.-A., Stock, P., Scao, T.~L., Lavril, T., Wang, T., Lacroix, T., and Sayed, W.~E. (2023).
\newblock Mistral 7b.

\bibitem[Koller and Friedman, 2009]{koller2009probabilistic}
Koller, D. and Friedman, N. (2009).
\newblock {\em Probabilistic graphical models: principles and techniques}.
\newblock MIT press.

\bibitem[Lewis et~al., 2020]{10.5555/3495724.3496517}
Lewis, P., Perez, E., Piktus, A., Petroni, F., Karpukhin, V., Goyal, N., K\"{u}ttler, H., Lewis, M., Yih, W.-t., Rockt\"{a}schel, T., Riedel, S., and Kiela, D. (2020).
\newblock Retrieval-augmented generation for knowledge-intensive nlp tasks.
\newblock In {\em Proceedings of the 34th International Conference on Neural Information Processing Systems}, NIPS '20, Red Hook, NY, USA. Curran Associates Inc.

\bibitem[Manakul et~al., 2023]{manakul-etal-2023-selfcheckgpt}
Manakul, P., Liusie, A., and Gales, M. (2023).
\newblock {S}elf{C}heck{GPT}: Zero-resource black-box hallucination detection for generative large language models.
\newblock In Bouamor, H., Pino, J., and Bali, K., editors, {\em Proceedings of the 2023 Conference on Empirical Methods in Natural Language Processing}, pages 9004--9017, Singapore. Association for Computational Linguistics.

\bibitem[Maynez et~al., 2020]{maynez-etal-2020-faithfulness}
Maynez, J., Narayan, S., Bohnet, B., and McDonald, R. (2020).
\newblock On faithfulness and factuality in abstractive summarization.
\newblock In Jurafsky, D., Chai, J., Schluter, N., and Tetreault, J., editors, {\em Proceedings of the 58th Annual Meeting of the Association for Computational Linguistics}, pages 1906--1919, Online. Association for Computational Linguistics.

\bibitem[Pagnoni et~al., 2021]{pagnoni-etal-2021-understanding}
Pagnoni, A., Balachandran, V., and Tsvetkov, Y. (2021).
\newblock Understanding factuality in abstractive summarization with {FRANK}: A benchmark for factuality metrics.
\newblock In Toutanova, K., Rumshisky, A., Zettlemoyer, L., Hakkani-Tur, D., Beltagy, I., Bethard, S., Cotterell, R., Chakraborty, T., and Zhou, Y., editors, {\em Proceedings of the 2021 Conference of the North American Chapter of the Association for Computational Linguistics: Human Language Technologies}, pages 4812--4829, Online. Association for Computational Linguistics.

\bibitem[Patel et~al., 2021]{patel-etal-2021-nlp}
Patel, A., Bhattamishra, S., and Goyal, N. (2021).
\newblock Are {NLP} models really able to solve simple math word problems?
\newblock In {\em Proceedings of the 2021 Conference of the North American Chapter of the Association for Computational Linguistics: Human Language Technologies}, pages 2080--2094, Online. Association for Computational Linguistics.

\bibitem[Press et~al., 2023]{press-etal-2023-measuring}
Press, O., Zhang, M., Min, S., Schmidt, L., Smith, N., and Lewis, M. (2023).
\newblock Measuring and narrowing the compositionality gap in language models.
\newblock In Bouamor, H., Pino, J., and Bali, K., editors, {\em Findings of the Association for Computational Linguistics: EMNLP 2023}, pages 5687--5711, Singapore. Association for Computational Linguistics.

\bibitem[Roy and Roth, 2015]{roy-roth-2015-solving}
Roy, S. and Roth, D. (2015).
\newblock Solving general arithmetic word problems.
\newblock In M{\`a}rquez, L., Callison-Burch, C., and Su, J., editors, {\em Proceedings of the 2015 Conference on Empirical Methods in Natural Language Processing}, pages 1743--1752, Lisbon, Portugal. Association for Computational Linguistics.

\bibitem[Shinn et~al., 2023]{shinn2023reflexion}
Shinn, N., Cassano, F., Gopinath, A., Narasimhan, K., and Yao, S. (2023).
\newblock Reflexion: Language agents with verbal reinforcement learning.
\newblock {\em Advances in Neural Information Processing Systems}, 36:8634--8652.

\bibitem[Turpin et~al., 2023]{turpin2023language}
Turpin, M., Michael, J., Perez, E., and Bowman, S.~R. (2023).
\newblock Language models don't always say what they think: Unfaithful explanations in chain-of-thought prompting.
\newblock In {\em Thirty-seventh Conference on Neural Information Processing Systems}.

\bibitem[Wang et~al., 2023]{wang2023selfconsistency}
Wang, X., Wei, J., Schuurmans, D., Le, Q.~V., Chi, E.~H., Narang, S., Chowdhery, A., and Zhou, D. (2023).
\newblock Self-consistency improves chain of thought reasoning in language models.
\newblock In {\em The Eleventh International Conference on Learning Representations}.

\bibitem[Wang and Zhou, 2024]{wang2024chainofthought}
Wang, X. and Zhou, D. (2024).
\newblock Chain-of-thought reasoning without prompting.
\newblock In {\em The Thirty-eighth Annual Conference on Neural Information Processing Systems}.

\bibitem[Wei et~al., 2022]{cot_wei}
Wei, J., Wang, X., Schuurmans, D., Bosma, M., Ichter, B., Xia, F., Chi, E.~H., Le, Q.~V., and Zhou, D. (2022).
\newblock Chain-of-thought prompting elicits reasoning in large language models.
\newblock In {\em Proceedings of the 36th International Conference on Neural Information Processing Systems}, NIPS '22, Red Hook, NY, USA. Curran Associates Inc.

\bibitem[Yao et~al., 2022]{DBLP:journals/corr/abs-2210-03629}
Yao, S., Zhao, J., Yu, D., Du, N., Shafran, I., Narasimhan, K., and Cao, Y. (2022).
\newblock React: Synergizing reasoning and acting in language models.
\newblock {\em CoRR}, abs/2210.03629.

\bibitem[Zheng et~al., 2024]{zheng2024critic}
Zheng, X., Lou, J., Cao, B., Wen, X., Ji, Y., Lin, H., Lu, Y., Han, X., Zhang, D., and Sun, L. (2024).
\newblock Critic-cot: Boosting the reasoning abilities of large language model via chain-of-thoughts critic.
\newblock {\em arXiv preprint arXiv:2408.16326}.

\end{thebibliography}

\end{document}